\newtheorem{thm}{Theorem}[section]
\newtheorem{lem}[thm]{Lemma}
\newtheorem{prop}[thm]{Proposition}
\newtheorem*{prop*}{Proposition}
\newtheorem*{cor*}{Corollary}
\theoremstyle{remark}
\newcommand{\myCTx}{C_T(x)}
\newcommand{\myPhi}{\Phi}
\newcommand{\myQTx}{Q_T(x)}
\newcommand{\myRxx}{R_{xx}}
\newcommand{\myw}{w}
\newcommand{\myW}{W}
\newcommand{\myU}{U}
\newcommand{\myx}{x}
\newcommand{\myy}{y}
\newcommand{\mylambda}{\lambda}
\newcommand{\RR}{\mathbb{R}}
\definecolor{darkviolet}{rgb}{0.9,0,0.9} 
\DeclareMathOperator{\Trace}{Tr}
\begin{document}
%
\title{Instabilities in Convnets for Raw Audio}
%
%
%


\author{Daniel Haider, Vincent Lostanlen, Martin Ehler, and Peter Balazs
\thanks{D. Haider and P. Balazs are with the Acoustics Research Institute, Austrian Academy of Sciences, Vienna, Austria. V. Lostanlen is with Nantes Université, École Centrale Nantes, CNRS, LS2N, UMR 6004, F-44000 Nantes, France. M. Ehler is with University of Vienna, Faculty of Mathematics, Vienna, Austria.}
}

%
%

\markboth{IEEE Signal Processing Letters,~Vol.~31, pp.~1084-1088, 2024}
{Haider \MakeLowercase{\textit{et al.}}: \title{}}
%



\maketitle

\begin{abstract}
What makes waveform-based deep learning so hard?
Despite numerous attempts at training convolutional neural networks (convnets) for filterbank design, they often fail to outperform hand-crafted baselines.
These baselines are linear time-invariant systems: as such, they can be approximated by convnets with wide receptive fields.
Yet, in practice, gradient-based optimization leads to suboptimal results.
In our article, we approach this problem from the perspective of initialization.
We present a theory of large deviations for the energy response of FIR filterbanks with random Gaussian weights.
We find that deviations worsen for large filters and locally periodic input signals, which are both typical for audio signal processing applications. Numerical simulations align with our theory and suggest that the condition number of a convolutional layer follows a logarithmic scaling law between the number and length of the filters, which is reminiscent of discrete wavelet bases.
\end{abstract}

\begin{IEEEkeywords}
Convolutional neural networks, digital filters, audio processing, statistical learning, frame theory.
\end{IEEEkeywords}

\IEEEpeerreviewmaketitle

\section{Introduction}

\IEEEPARstart{F}{ilterbanks} are linear time-invariant systems which decompose a signal $\myx$ into $J>1$ subbands. By convolution with filters $(\myw_j)_{j=1,\dots,J}$ the output of a filterbank $\myPhi$ is given by $\left(\myPhi \myx\right)[n,j] = (\myx \ast \myw_j)[n]$.
Filterbanks play a key role in speech and music processing: constant-Q-transforms, third-octave spectrograms, and Gammatone filterbanks are some well-known examples \cite{necciari2018audlets, balazs2017framespsycho, lyon2017human}.
Beyond the case of audio, filterbanks are also used in other domains such as seismology \cite{meier2015gutenberg}, astrophysics \cite{chassande2003learning}, and neuroscience \cite{ang2008filter}.

In deep learning, filterbanks serve as a preprocessing step to signal classification and generation.
In this context, filterbank design is a form of feature engineering.
Yet, in recent years, several authors have proposed to replace feature engineering with feature learning: i.e., to optimize filterbank parameters jointly with the rest of the pipeline \cite{dorfler2020basic,ravanelli2018speaker,zeghidour2021leaf}.
 
So far, prior work on filterbank learning has led to mixed results.
For example, on the TIMIT dataset, using a convolutional neural network (convnet) with 1-D filters on the ``raw waveform'' was found to fare poorly (29.2\% phone error rate or PER) compared to the mel--spectrogram baseline (17.8\% PER) \cite{zeghidour2018learning}.
Interestingly, fixing the convnet weights to form a filterbank on the mel--scale brings the PER to 18.3.\%, and fine-tuning them by gradient descent, to 17.8\%.
Similar findings have been reported with Gammatone filterbanks \cite{sainath2015learning}.

Arguably, such a careful initialization procedure defeats the purpose of deep learning; i.e., sparing the effort of feature engineering.
Furthermore, in some emerging topics of machine listening such as bioacoustics, it would be practically useful to train a filterbank with random initialization to learn something about acoustic events of interest with minimal domain-specific knowledge \cite{hopp2012animal, stowell2022computational}.
Yet, filterbank learning has been outperformed by filterbank design, particularly from a random initialization
\cite{lopez2021exploring,schluter2022efficientleaf,bravo2021bioacoustic}.
Recently, multiresolution neural networks (MuReNN) \cite{lostanlen2023fitting} have circumvented this issue in practice.
In general, however, there are no theoretical results dedicated to filterbank learning specifically.

Prior publications have shown that Lipschitz stability in neural nets is crucial for robustness against adversarial examples \cite{cisse2017parseval, gupta2022lipschitz}. In this article, we allocate this paradigm to the setting of filterbank learning, where we focus on the stability of convnets with 1-D filters at the stage of random Gaussian initialization, interpreted as \textit{random filterbanks}. We hope that this provides valuable insights for understanding their general learning behavior.
While there are methods to compute Lipschitz constants of neural nets deterministically \cite{virmaux2018lipschitz,balan2018lipschitznn}, the originality of our work lies in regarding a random filterbank $\myPhi$ as a non-i.i.d. random matrix to study the behavior of the bounds $A,B$ in the inequality $A \Vert \myx \Vert^2 \leq\Vert \myPhi \myx \Vert^2 \leq B \Vert \myx \Vert^2$ in a probabilistic setting. If $A$ and $B$ are close to each other, this can be interpreted as $\myPhi$ satisfying an energy preservation property with high probability.
With this perspective, we show that natural autocorrelation characteristics of audio signals trigger instabilities in $\myPhi$ with high probability. We also find that the bounds $A,B$ are highly sensitive to the design of the random filterbank, i.e., the number and length of the filters.

\begin{figure}[t]
    \centering
    \includegraphics[width =\linewidth]{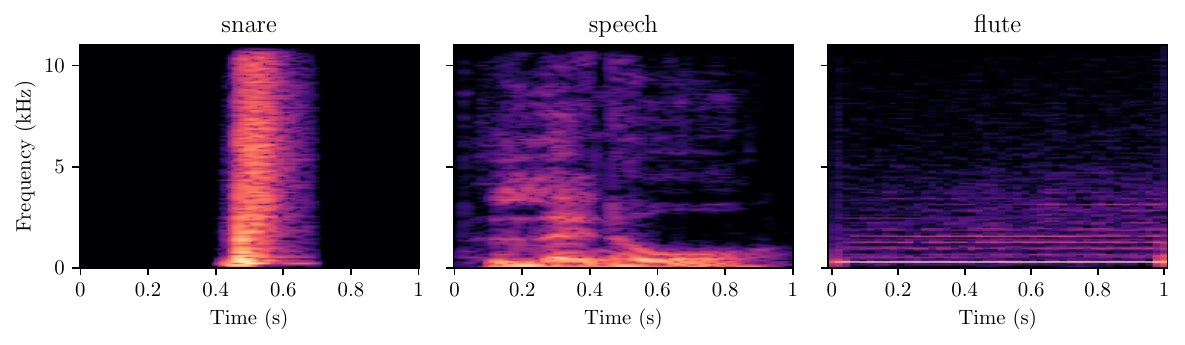}
    \includegraphics[width =\linewidth]{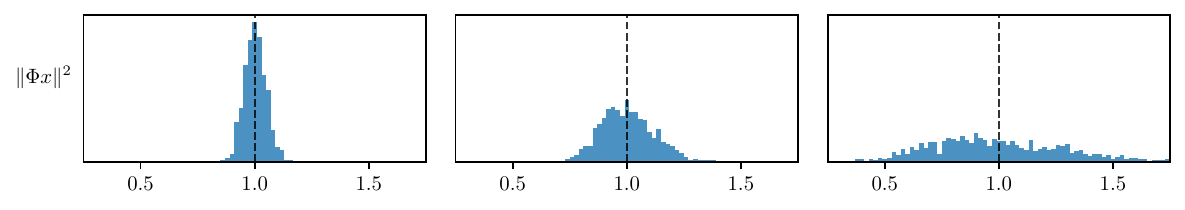}
    \caption{Autocorrelation in the input signal $\myx$ increases the variance of the filterbank response energy $\Vert \myPhi \myx \Vert^2$ across random initializations. We compare audio signals with different autocorrelation profiles. Left to right: Snare (low), speech (medium), and flute (high). Top: Spectrograms of the signals. Bottom: Empirical histogram of $\Vert \myPhi \myx \Vert^2$ for 1000 independent realizations of $\myPhi$.
    }
    \label{fig:spec}
\end{figure}

In Section \ref{sec:x-known}, we
prove explicit formulas for the expected value and variance of the random variable $\Vert \myPhi \myx \Vert^2$ for given input signal $\myx$, and derive upper bounds for the probability of large deviations.
In Section \ref{sec:x-unknown}, we bound the expected values and variances of the optimal stability bounds of $\myPhi$, i.e., $A = \min_{\Vert x \Vert = 1}\Vert \myPhi \myx \Vert^2$ and $B = \max_{\Vert x \Vert = 1}\Vert \myPhi \myx \Vert^2$.
We conclude with an asymptotic analysis of the stability of $\myPhi$ by means of its condition number $\kappa = B/A$.

\section{FIR Filterbank with Random Gaussian Weights}
\label{sec:x-known}
Throughout this article, we use finite circulant convolution of signals $\myx\in \RR^N$ with filters $\myw\in \RR^T$, $T\leq N$, given by 
\begin{equation}\label{eq:conv}
    (\myx \ast \myw)[n] = \sum_{k=0}^{T-1}\myw[k]\myx[(n-k)\;\mathrm{mod}\;N].
\end{equation}
We denote the circular autocorrelation of $\myx$ for $0\leq t < T$ by
\begin{equation}\label{eq:auto}
R_{\myx \myx}(t) = \sum_{k = 0}^{N-1} \myx[k]\myx[(k-t)\;\mathrm{mod}\;N].
\end{equation}

\subsection{Moments of the squared Euclidean norm}

\begin{prop}\label{prop:exp}
    Let $\myx\in \RR^N$ and $\myPhi$ a random filterbank with $J$ i.i.d. filters $\myw_{\!j}\sim \mathcal{N}(0,\sigma^2 I)$ of length $T\leq N$.
    Then expectation and variance of $\Vert\myPhi\myx\Vert^2$ satisfy
    \begin{gather}\label{eq:expPhi}
        \mathbb{E}\left[\Vert \myPhi\myx\Vert^2\right] = J T\sigma^2 \Vert\myx\Vert^2,\\\label{eq:var}
    \mathbb{V}\left[\Vert \myPhi\myx\Vert^2\right] = 2 J \sigma^4 \sum_{\tau=-T}^{T} \big(T-\vert\tau\vert\big) \myRxx(\tau)^2.
    \end{gather}
\end{prop}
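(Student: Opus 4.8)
The plan is to use the independence of the $J$ channels to collapse both moments onto a single filter, and then to express the squared subband energy as a quadratic form in the Gaussian weights. Writing $Y_j = \Vert \myx \ast \myw_{\!j}\Vert^2$, the $Y_j$ are i.i.d.\ because the $\myw_{\!j}$ are, so $\Vert\myPhi\myx\Vert^2 = \sum_{j=1}^J Y_j$ immediately gives $\mathbb{E}[\Vert\myPhi\myx\Vert^2] = J\,\mathbb{E}[Y_1]$ and $\mathbb{V}[\Vert\myPhi\myx\Vert^2] = J\,\mathbb{V}[Y_1]$. Everything then reduces to the single-filter moments.

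For one filter, I would read off from \eqref{eq:conv} that each output sample is linear in $\myw$, namely $(\myx\ast\myw)[n] = \langle\myw,\myx_n\rangle$ with $\myx_n[k]=\myx[(n-k)\bmod N]$, so that $Y_1 = \myw^\top M\myw$ for the symmetric $T\times T$ matrix $M = \sum_{n=0}^{N-1}\myx_n\myx_n^\top$. The conceptual crux is to recognize $M$ as the circular autocorrelation matrix: reindexing the sum in $M_{kl}=\sum_n\myx[(n-k)\bmod N]\,\myx[(n-l)\bmod N]$ shows $M_{kl}=\myRxx(k-l)$, a symmetric Toeplitz matrix whose diagonal equals $\myRxx(0)=\Vert\myx\Vert^2$.

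With $M$ in hand, the moments of $Y_1$ are the textbook formulas for Gaussian quadratic forms. Diagonalizing $M$ by an orthogonal change of variables turns $\myw^\top M\myw$ into a weighted sum of independent $\sigma^2\chi^2_1$ variables, yielding $\mathbb{E}[Y_1]=\sigma^2\Trace(M)$ and $\mathbb{V}[Y_1]=2\sigma^4\Trace(M^2)$. The first trace is immediate, $\Trace(M)=T\,\myRxx(0)=T\Vert\myx\Vert^2$, which after the factor $J$ recovers \eqref{eq:expPhi}.

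The step I expect to carry the real work is $\Trace(M^2)$. By symmetry, $\Trace(M^2)=\sum_{k,l=0}^{T-1}M_{kl}^2=\sum_{k,l}\myRxx(k-l)^2$, and I would reorganize this sum by the lag $\tau=k-l$: each $\tau$ with $\vert\tau\vert<T$ is realized by exactly $T-\vert\tau\vert$ index pairs, which produces the triangular weighting $\sum_\tau(T-\vert\tau\vert)\myRxx(\tau)^2$ and hence \eqref{eq:var} after multiplying by $2J\sigma^4$. The endpoints $\vert\tau\vert=T$ contribute zero weight, consistent with the summation range written in \eqref{eq:var}. The only genuinely nonroutine point is this lag-counting; the Gaussian moment identities and the autocorrelation bookkeeping are otherwise mechanical.
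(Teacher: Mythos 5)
Your proposal is correct and follows essentially the same route as the paper: your matrix $M=\sum_n \myx_n\myx_n^\top$ is exactly the paper's $\myQTx=C_T(\myx)^\top C_T(\myx)$ from Lemma \ref{lem:quad}, the moment identities $\mathbb{E}[Y_1]=\sigma^2\Trace(M)$ and $\mathbb{V}[Y_1]=2\sigma^4\Trace(M^2)$ are the same Gaussian quadratic-form formulas the paper invokes (you derive them by diagonalization, the paper cites them), and your lag-counting of $\Trace(M^2)$ is the paper's ``combinatorial argument.'' No gaps.
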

We note that \eqref{eq:expPhi} is known for $J=1$ and $T=N$ \cite{ehler2015pre}. Setting $\sigma^2=(JT)^{-1}$ implies $\mathbb{E}\left[\Vert \myPhi\myx\Vert^2\right] = \Vert\myx\Vert^2$.
In other words, if the variance of each parameter $w_j$ scales in inverse proportion with the total number of parameters (i.e., $JT$), then $\myPhi$ satisfies energy preservation on average.
However, it is important to see that the variance of the random variable $\Vert \myPhi\myx\Vert^2$ depends
also on the content of the input $\myx$: specifically, its autocorrelation $\myRxx$.
This is a peculiar property of convnets that does not happen in fully connected layers with random Gaussian initialization \cite{ehler2015pre}. A discussion on this can be found in the appendix.

We note that natural audio signals are often locally periodic and thus highly autocorrelated.
Hence, we interpret Proposition \ref{prop:exp} as follows: untrained convnets are particularly unstable in the presence of vowels in speech or pitched notes in music. Figure \ref{fig:spec} shows this phenomenon for three real-world signals.
\begin{lem}\label{lem:quad}
    Let $\myx\in \RR^N$ and $\myw\in\RR^T$, $T\leq N$.
    The circular convolution of $\myx$ and $\myw$ satisfies $\Vert\myx \ast \myw\Vert^2 = \myw^{\top} \myQTx \myw$, where the entries of the matrix $\myQTx$
    are given by 
    $\myQTx[n,t] = R_{\myx \myx}((t-n)\;\mathrm{mod}\;N)$ for each $0\leq n, t < T$.
    In particular, all diagonal entries of $\myQTx$ are equal to $\|\myx\|^2$.
\end{lem}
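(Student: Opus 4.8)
The plan is to expand the squared Euclidean norm directly from the definition of circular convolution in \eqref{eq:conv} and reorganize the resulting sum into a quadratic form in $\myw$. First I would write
\[
\Vert \myx \ast \myw \Vert^2 = \sum_{n=0}^{N-1} \Big( \sum_{k=0}^{T-1} \myw[k]\, \myx[(n-k)\bmod N] \Big)^2
\]
and expand the square into a double sum over indices $k,l \in \{0,\dots,T-1\}$. Interchanging the order of summation to pull the products $\myw[k]\myw[l]$ outside, the coefficient of $\myw[k]\myw[l]$ becomes the inner sum $\sum_{n=0}^{N-1} \myx[(n-k)\bmod N]\,\myx[(n-l)\bmod N]$, so that $\Vert \myx\ast\myw\Vert^2$ is already visibly a quadratic form $\myw^\top M \myw$ for some $T\times T$ matrix $M$.

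Second, I would identify $M$ with $\myQTx$ by recognizing the inner sum as a circular autocorrelation. Applying the cyclic change of variable $m = (n-l)\bmod N$ -- a bijection of $\{0,\dots,N-1\}$ onto itself because the outer sum runs over a full period -- converts the inner sum into $\sum_{m=0}^{N-1} \myx[m]\,\myx[(m-(k-l))\bmod N] = \myRxx((k-l)\bmod N)$ by the definition \eqref{eq:auto}. This gives $M[k,l] = \myRxx((k-l)\bmod N)$ entrywise.

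Third, to match the indexing written in the statement I would invoke the symmetry $\myRxx(t) = \myRxx((-t)\bmod N)$, which follows from \eqref{eq:auto} via the substitution $j=(k-t)\bmod N$. This shows $M$ is symmetric and lets me rewrite the entry as $\myRxx((t-n)\bmod N)$, so $M = \myQTx$. The diagonal claim is then immediate, since $\myQTx[n,n] = \myRxx(0) = \sum_{k=0}^{N-1}\myx[k]^2 = \Vert\myx\Vert^2$.

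The computation is entirely elementary; the only delicate point is the change of variables in the second step, where I must verify that $m=(n-l)\bmod N$ is a genuine bijection on the full index set $\{0,\dots,N-1\}$, so that the reindexed sum is \emph{exactly} the circular autocorrelation with no boundary terms lost. This is precisely the place where the circular (rather than linear) convolution convention is essential, and it is what I expect to warrant the most care.
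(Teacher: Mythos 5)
Your proof is correct and follows essentially the same route as the paper's: the paper simply packages your double-sum expansion as the matrix identity $Q_T(\myx)=C_T(\myx)^{\top}C_T(\myx)$, where $C_T(\myx)[n,t]=\myx[(n-t)\bmod N]$, and then computes the same inner sum $\sum_{n}\myx[(n-t)\bmod N]\,\myx[(n-t')\bmod N]=R_{\myx\myx}((t'-t)\bmod N)$ that you obtain by your change of variables. Your explicit verification of the cyclic bijection and of the symmetry $R_{\myx\myx}(t)=R_{\myx\myx}((-t)\bmod N)$ is slightly more careful than the paper's, but the content is identical.
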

The result gets obvious when writing $\myx \ast \myw = C_T(\myx) \myw$ and noting that $Q_T(\myx) = C_T(\myx)^\top C_T(\myx)$.
A detailed proof can be found in the appendix. With this we prove Proposition \ref{prop:exp}.
\begin{proof}[Proof of Proposition \ref{prop:exp}]
Given a filter $\myw_{\!j}$ for $1\leq j\leq J$, we apply Lemma \ref{lem:quad} and use the cyclic property of the trace
\begin{equation}
\Vert\myx \ast \myw_{\!j}\Vert^2 =
\Trace \left(\myw_{\!j}^\top \myQTx \myw_{\!j}\right) =
\Trace \left(\myQTx \myw_{\!j} \myw_{\!j}^\top \right).
\label{eq:norm-x-wf-squared}
\end{equation}
We take the expected value on both sides and recognize the term $\mathbb{E}[\myw_{\!j} \myw_{\!j}^\top]$ as the covariance matrix of $\myw_{\!j}$, i.e., $\sigma^2 I$. Hence:
\begin{equation}\label{eq:trace}
\mathbb{E}\big[\Vert\myx \ast \myw_{\!j}\Vert^2\big] =
\Trace \left(\myQTx \mathbb{E}\left[\myw_{\!j} \myw_{\!j}^\top \right]\right) =
\sigma^2 \Trace \left(\myQTx\right).
\end{equation}
By Lemma \ref{lem:quad}, $\Trace (\myQTx) = T \Vert \myx \Vert^2$, hence $\mathbb{E}[\Vert\myx \ast \myw_{\!j}\Vert^2] = \sigma^2  T \Vert \myx \Vert^2$.
For the variance, we recall Theorem 5.2 from \cite{rencher2008linear}, which states that if $v\sim\mathcal{N}(\mu,\Sigma)$, then for any matrix $M$
\begin{equation}\label{eq:varquadr}
\mathbb{V}\big[v^{\top}M v\big] =
2 \Trace \big(M \Sigma M \Sigma\big)
+ 4 \mu^{\top}M\Sigma M \mu
\end{equation}
We set $v=\myw_{\!j}$, $\mu = 0$, $\Sigma = \sigma^2 I$, and $M = \myQTx$. We obtain:
\begin{align}
\mathbb{V}\big[\Vert\myx \ast \myw_{\!j}\Vert^2\big]
&= 2 \sigma^4 \Trace\left(\myQTx^2\right) \nonumber \\
&= 2 \sigma^4 \sum_{t=0}^{T-1} \sum_{t^\prime=0}^{T-1} \myRxx(t^\prime-t) \myRxx(t-t^\prime) \nonumber \\
&= 2 \sigma^4 \sum_{t=0}^{T-1} \sum_{\tau=-t}^{T-1-t} \myRxx(\tau)^2.
\end{align}
By a combinatorial argument, the double sum above rewrites as $\sum_{\tau=-T}^{T} \big(T-\vert\tau\vert\big) \myRxx(\tau)^2$.
The proof concludes by linearity of the variance, given the independence of the $J$ filters in $\myPhi$.
\end{proof}

\begin{figure}
    \centering
    \includegraphics[width =\linewidth]{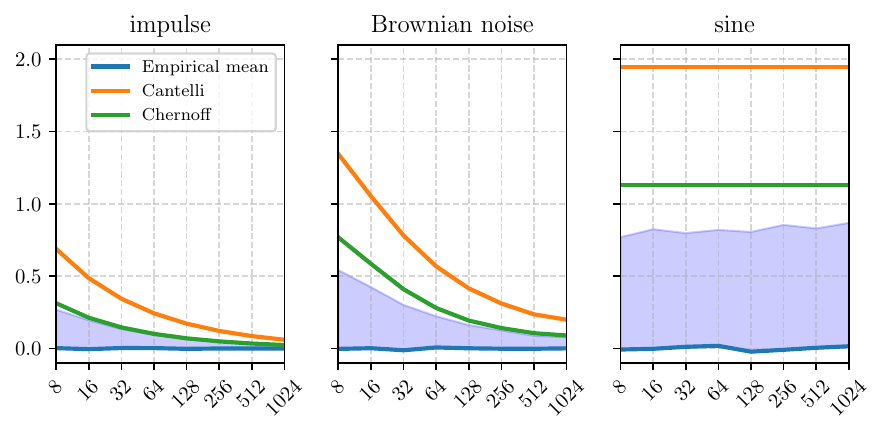}
    \includegraphics[width =\linewidth]{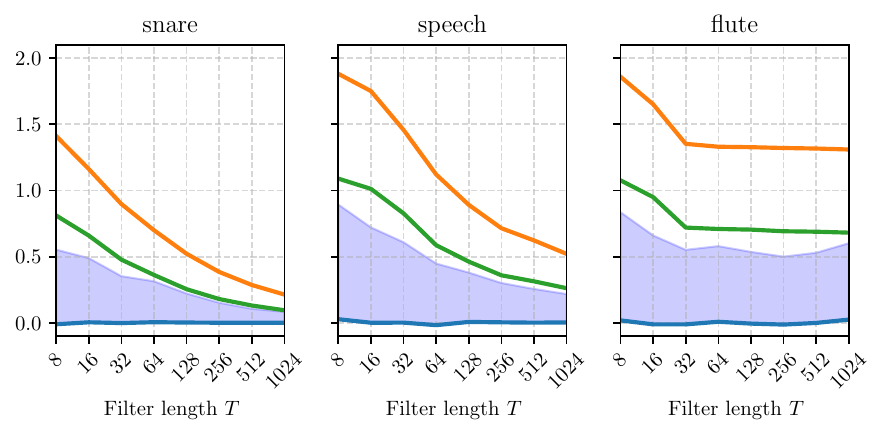}
    \caption{
    Large deviations of filterbank response energy ($\Vert \myPhi\myx \Vert^2 -\Vert \myx \Vert^2$) for three synthetic signals of length $N=1024$ (top) and three natural signals of length $N=22050$ (bottom).
    Blue: empirical mean and $95^{\textrm{th}}$ percentile across $1000$ realizations of $\myPhi$.
    We show two theoretical bounds from Proposition \ref{prop:cheb}: Cantelli (Equation \eqref{eq:cheb}, orange) and Chernoff (Equation \ref{eq:cher}, green).
    Each filterbank contains $J=10$ filters of length $T=2^k$ where $3\leq k\leq10$. 
    }
    \label{fig:cheb}
\end{figure}

After scaling $\myPhi$ such that it preserves energy on average, i.e. $\mathbb{E}\left[\Vert\myPhi \myx\Vert^2\right] = \Vert\myx\Vert^2$, we now derive upper bounds on the probability of large deviations of $\Vert \myPhi \myx\Vert^2$ given $\myx\neq 0$.
\begin{prop}[Cantelli bound]\label{prop:cheb}
Let $\myPhi$ be a random filterbank with $J$ i.i.d. filters $\myw_{\!j}\sim \mathcal{N}(0,\sigma^2 I)$ of length $T$ and $\sigma^{2} = (JT)^{-1}$. Given a deviation $\alpha\geq0$, the probability of $\Vert \myPhi\myx\Vert^2$ exceeding $(1+\alpha)\Vert \myx\Vert^2$ is bounded from above as
\begin{equation}
\mathbb{P}\big[
\Vert \myPhi \myx \Vert^2 \geq (1+\alpha) \Vert \myx \Vert^2 \big] \leq
\frac{\mathbb{V}\left[\Vert \myPhi\myx\Vert^2\right]}{\mathbb{V}\left[\Vert \myPhi\myx\Vert^2\right]+\alpha^2\myRxx(0)^2}.
\label{eq:cheb}
\end{equation}
\end{prop}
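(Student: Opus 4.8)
The plan is to recognize the claimed inequality as a direct instance of the one-sided Chebyshev (Cantelli) inequality applied to the scalar random variable $Y = \Vert \myPhi \myx \Vert^2$. Recall that Cantelli's inequality states that for any real random variable $Y$ with finite mean $\mu$ and variance $v$, and any $\lambda \geq 0$, one has $\mathbb{P}[Y - \mu \geq \lambda] \leq v/(v+\lambda^2)$. Since $Y$ is a finite sum of (correlated) squared Gaussians, both its mean and variance are finite, so the inequality applies without qualification.

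First I would compute the mean. By Proposition \ref{prop:exp}, $\mathbb{E}[Y] = JT\sigma^2 \Vert \myx \Vert^2$, and the choice $\sigma^2 = (JT)^{-1}$ collapses this to $\mu = \Vert \myx \Vert^2$. Next I would rewrite the target event so that it is expressed as a one-sided deviation from the mean: the event $\{Y \geq (1+\alpha)\Vert \myx \Vert^2\}$ coincides with $\{Y - \mu \geq \alpha \Vert \myx \Vert^2\}$, so the relevant deviation level is $\lambda = \alpha \Vert \myx \Vert^2 \geq 0$.

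Applying Cantelli with this $\lambda$ and $v = \mathbb{V}[Y]$ yields the bound with denominator $v + \alpha^2 \Vert \myx \Vert^4$. The final cosmetic step is to replace $\Vert \myx \Vert^4$ by $\myRxx(0)^2$, using the elementary identity $\Vert \myx \Vert^2 = \sum_{k} \myx[k]^2 = \myRxx(0)$ that follows directly from the definition of circular autocorrelation at lag zero. This reproduces exactly the stated right-hand side.

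There is no genuine analytic obstacle here; the substantive work was already carried out in Proposition \ref{prop:exp}, which supplies the closed forms for $\mu$ and $v$. The only points requiring care are bookkeeping ones: invoking the correct one-sided form of the Chebyshev bound so that the constant is $v/(v+\lambda^2)$ rather than the two-sided $v/\lambda^2$, keeping the direction of the deviation consistent throughout, and correctly identifying the deviation as $\lambda = \alpha \Vert \myx \Vert^2$ so that squaring it matches the $\alpha^2 \myRxx(0)^2$ term in the denominator.
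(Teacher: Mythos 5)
Your proposal is correct and follows essentially the same route as the paper's own proof: apply Cantelli's one-sided inequality to $Z = \Vert \myPhi \myx \Vert^2$ with deviation $\beta = \alpha \Vert \myx \Vert^2$, use Proposition \ref{prop:exp} for the mean, and identify $\Vert \myx \Vert^4 = \myRxx(0)^2$ via the lag-zero autocorrelation. No gaps.
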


\begin{prop}[Chernoff bound]\label{prop:chernoff}
Let $\mylambda$ denote the vector of eigenvalues of $\myQTx$.
Under the same assumptions as Proposition \ref{prop:cheb}, and given a deviation $\alpha\geq0$, the probability of $\Vert \myPhi\myx\Vert^2$ exceeding $(1+\alpha)\Vert \myx\Vert^2$ is bounded from above as
\begin{equation}
\mathbb{P}\big[
\Vert \myPhi \myx \Vert^2 \geq (1+\alpha) \Vert \myx \Vert^2 \big] \leq
\exp \left(-\dfrac{\alpha^2 JT^{2} \Vert \myx \Vert^4}{2 \alpha T \Vert \mylambda \Vert_{\infty} \Vert \myx \Vert^2 + 2 \Vert \mylambda \Vert_2^2}\right).
\label{eq:cher}
\end{equation}
\end{prop}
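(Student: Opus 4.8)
The plan is to use Chernoff's exponential method, exploiting that $\Vert\myPhi\myx\Vert^2$ is a quadratic form in a Gaussian vector. First I would diagonalize the symmetric positive semidefinite matrix $\myQTx$ from Lemma~\ref{lem:quad}, writing $\myQTx = U\diag(\mylambda)U^\top$ with $U$ orthogonal. Since $\Vert\myPhi\myx\Vert^2=\sum_{j=1}^J\Vert\myx\ast\myw_{\!j}\Vert^2$ and, by Lemma~\ref{lem:quad}, each summand obeys $\Vert\myx\ast\myw_{\!j}\Vert^2=\myw_{\!j}^\top\myQTx\myw_{\!j}$, I would use the rotational invariance of the isotropic law $\myw_{\!j}\sim\mathcal N(0,\sigma^2 I)$: the vector $U^\top\myw_{\!j}$ is again $\mathcal N(0,\sigma^2 I)$, so $\Vert\myPhi\myx\Vert^2=\sigma^2\sum_{i=1}^T\mylambda_i Y_i$, where the $Y_i\sim\chi^2_J$ are independent chi-squared variables with $J$ degrees of freedom. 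This reduces the claim to a concentration statement for a $\mylambda$-weighted sum of independent chi-squared variables.

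Second, I would compute the moment generating function in closed form. By independence across eigendirections and filters, $\mathbb E[\exp(s\Vert\myPhi\myx\Vert^2)] = \prod_{i=1}^T(1-2s\sigma^2\mylambda_i)^{-J/2}$ for every $0\le s<(2\sigma^2\Vert\mylambda\Vert_\infty)^{-1}$. Markov's inequality applied to $\exp(s\Vert\myPhi\myx\Vert^2)$ then yields, for each admissible $s$,
\begin{equation}
\mathbb P\big[\Vert\myPhi\myx\Vert^2\ge(1+\alpha)\Vert\myx\Vert^2\big]\le\exp\Big(-s(1+\alpha)\Vert\myx\Vert^2-\frac{J}{2}\sum_{i=1}^{T}\log(1-2s\sigma^2\mylambda_i)\Big).
\end{equation}

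Third, I would control the log-terms with the elementary inequality $-\log(1-v)\le v+\frac{v^2}{2(1-v)}$, valid for $0\le v<1$, applied with $v=2s\sigma^2\mylambda_i$. The linear part contributes the term $s\,J\sigma^2\Trace(\myQTx)=s\,J\sigma^2 T\Vert\myx\Vert^2=s\Vert\myx\Vert^2$ to the exponent, using Lemma~\ref{lem:quad} and $\sigma^2=(JT)^{-1}$; this cancels against the mean and turns $-s(1+\alpha)\Vert\myx\Vert^2$ into $-s\alpha\Vert\myx\Vert^2$. Bounding the denominators uniformly by $1-2s\sigma^2\Vert\mylambda\Vert_\infty$ and using $\sum_i\mylambda_i^2=\Vert\mylambda\Vert_2^2$, the quadratic part is at most $\frac{s^2 J\sigma^4\Vert\mylambda\Vert_2^2}{1-2s\sigma^2\Vert\mylambda\Vert_\infty}$, leaving the sub-gamma bound
\begin{equation}
\mathbb P\big[\Vert\myPhi\myx\Vert^2\ge(1+\alpha)\Vert\myx\Vert^2\big]\le\exp\Big(-s\alpha\Vert\myx\Vert^2+\frac{s^2 J\sigma^4\Vert\mylambda\Vert_2^2}{1-2s\sigma^2\Vert\mylambda\Vert_\infty}\Big).
\end{equation}

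Finally, I would optimize over $s$ and substitute $\sigma^2=(JT)^{-1}$. Choosing $s$ to balance the two terms in the denominator of the exponent collapses the bound into a single rational exponent in $\alpha$, $T$, $\Vert\mylambda\Vert_\infty$ and $\Vert\mylambda\Vert_2$, which after simplification should match the claimed form \eqref{eq:cher}. I expect the main obstacle to lie in this last step: the exact minimizer of the sub-gamma exponent solves a transcendental (Bennett-type) equation, so the real work is to select a near-optimal $s$ that produces the clean closed form, while tracking the numerical constants carefully, since the precise factor in the denominator is the delicate part of the statement.
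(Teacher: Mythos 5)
Your proposal follows essentially the same route as the paper: diagonalize $\myQTx$, use rotational invariance of the Gaussian filters to reduce $\Vert\myPhi\myx\Vert^2$ to a $\mylambda$-weighted sum of independent $\chi^2(J)$ variables, bound the log-moment-generating function by a sub-gamma expression $\frac{s^2 J\sigma^4\Vert\mylambda\Vert_2^2}{1-2s\sigma^2\Vert\mylambda\Vert_\infty}$ (your elementary inequality $-\log(1-v)\le v+\frac{v^2}{2(1-v)}$ is exactly the content of the Laurent--Massart bound the paper cites, so that part is fine and arguably more self-contained), and then optimize over $s$. The cancellation of the linear term against the mean via $\Trace(\myQTx)=T\Vert\myx\Vert^2$ and $\sigma^2=(JT)^{-1}$ is also exactly as in the paper.

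The one genuine gap is the final optimization, which you correctly flag as the delicate step but do not resolve. The natural ``balancing'' choice $s=\beta/(2v^2+2c\beta)$ with $v^2=J\sigma^4\Vert\mylambda\Vert_2^2$, $c=\sigma^2\Vert\mylambda\Vert_\infty$, $\beta=\alpha\Vert\myx\Vert^2$ yields only $\exp\bigl(-\beta^2/(4v^2+4c\beta)\bigr)$, which is weaker than the stated bound by a factor of $2$ in the exponent. The paper closes this by invoking Lemma 8 of Birg\'e--Massart, which asserts
\begin{equation*}
\inf_{\mu>0}\ \frac{\mu^2v^2}{1-2\mu c}-\mu\beta\ \le\ -\frac{\beta^2}{2c\beta+2v^2},
\end{equation*}
i.e.\ the \emph{exact} infimum of the sub-gamma exponent already achieves the clean rational form. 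Note also that the minimizer here is algebraic, not transcendental (the Bennett-type transcendental equation arises for Poisson-type log-MGFs, not for the rational bound you derived), so the exact minimization is tractable; you would need either to carry it out or to cite such a lemma to recover the precise constant in \eqref{eq:cher}. Everything else in your argument substitutes correctly: with $\sigma^2=(JT)^{-1}$ the bound $-\beta^2/(2c\beta+2v^2)$ becomes exactly the exponent in the statement.
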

The two propositions above have their own merits. Proposition \ref{prop:cheb} as direct consequence of Cantelli's inequality \cite{feller68introtoprob} is straightforward and interpretable in terms of the autocorrelation of $\myx$.
Meanwhile, Proposition \ref{prop:chernoff}, based on Chernoff's inequality \cite{chernoff52cher}, is closer to the empirical percentiles, yet is expressed in terms of the eigenvalues of $\myQTx$, for which there is no general formula. The proof is more technical and involves ideas from \cite{birge98lemma} and \cite{laurent00massart}.
Note that in the particular case of full-length filters ($T=N$), $\myQTx$ is a circulant matrix: hence, we interpret these eigenvalues as the energy spectral density of the input signal, i.e., $\mylambda=\vert \hat{\myx}\vert^2$ where $\hat{\myx}$ is the discrete Fourier transform of $\myx$. Detailed proofs for both propositions can be found in the appendix.

\subsection{Numerical simulation}\label{sec:x-known,exp}
We compute empirical probabilities of relative energy deviations between $\Vert \myPhi\myx \Vert^2$ and $\Vert \myx \Vert^2$ for different signals $\myx$ and filter lengths $T$.
Specifically, for each $\myx$ and each $T$, we simulate 1000 independent realizations of $\Vert \myPhi\myx \Vert^2$ for each value of $T$ and retain the closest 95\% displayed as shaded area in Figure \ref{fig:cheb}.
Additionally, we set the right-hand side of Propositions \ref{prop:cheb} and \ref{prop:chernoff} to $5\%$ and solve for $\alpha$, yielding upper bounds for this area.

The upper part of Figure \ref{fig:cheb} illustrates our findings for three synthetic signals: $(i)$ a single impulse, which has low autocorrelation, $(ii)$ a realization of Brownian noise, which has medium autocorrelation and $(iii)$ a sine wave with frequency $\omega = \pi$, which has high autocorrelation.
In the lower part of the same figure, we use real-world sounds: a snare drum hit, a spoken utterance, and a sustained note on the concert flute.

As predicted by the theory, large deviations of $\Vert \myPhi\myx \Vert^2$ become less probable as the filters grow in length $T$ if the input $\myx$ has little autocorrelation (e.g., snare). The rate of decay is slower for highly autocorrelated signals (e.g., flute). These findings explain the observations we already made in Figure \ref{fig:spec}.

\section{Extreme Value Theory meets Frame Theory}
\label{sec:x-unknown}

In the previous section, we have described the probability distribution of $\Vert \myPhi\myx \Vert^2$ for a known input signal $\myx$.
We now turn to inquire about the properties of $\myPhi$ as a linear operator; i.e., independently of $\myx$.
If there exist two positive numbers $A$ and $B$ such that the double inequality $
    A \Vert \myx \Vert^2 \leq
\Vert \myPhi \myx \Vert^2 \leq
B \Vert \myx \Vert^2$
holds for any $\myx \in \RR^N$, $\myPhi$ is said to be a \textit{frame} for $\mathbb{R}^N$ with frame bounds $A$ and $B$. The optimal frame bounds are given by $A = \min_{\Vert \myx \Vert_2 = 1} \Vert\myPhi\myx\Vert^2$ and $ B = \max_{\Vert \myx \Vert_2 = 1} \Vert\myPhi\myx\Vert^2.$

\begin{figure}
    \centering
    \includegraphics[width =\linewidth]{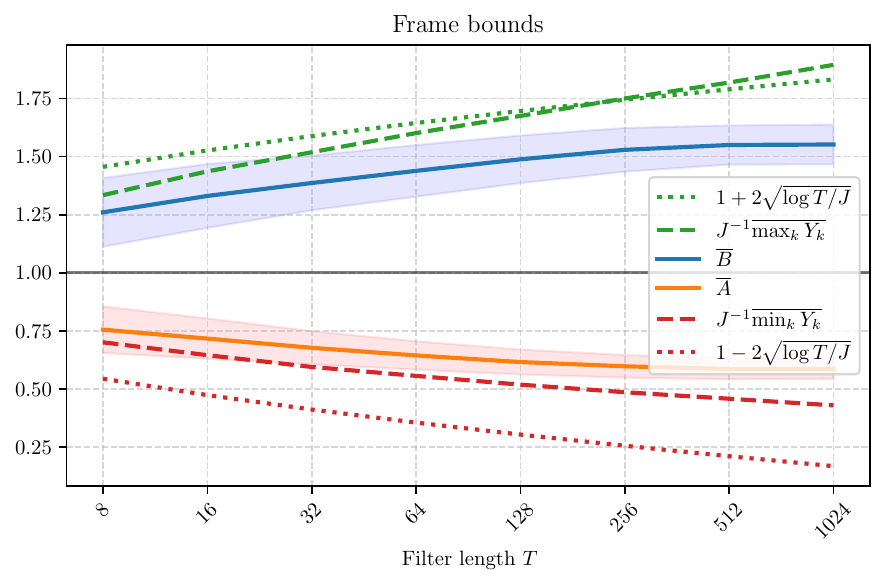}
    \caption{Empirical means $\overline{A}$ and $\overline{B}$ (solid lines) and $95^{\mathrm{th}}$ percentiles (shaded area) of frame bounds $A$ and $B$ for $1000$ instances of $\myPhi$ with $\sigma^2=(TJ)^{-1}$, $J=40$ and different values of $T$.
    Dashed lines denote the bounds of $\mathbb{E}[A]$ and $\mathbb{E}[B]$ from Theorem \ref{thm:expected-frame-bounds}.
    Dotted lines denote the asymptotic bounds in \eqref{eq:kappa_tilde}.}
    \label{fig:expected_fb}
\end{figure}

\subsection{From quadratic forms to chi-squared distributions}
Although the expected frame bounds $\mathbb{E}[A]$ and $\mathbb{E}[B]$ do not have closed-form expressions, we can relate them to the expected order statistics of the chi-squared distribution with $J$ degrees of freedom, denoted by $\chi^2(J)$.

\begin{thm}
\label{thm:expected-frame-bounds}
Let $\myPhi$ be a random filterbank with $J$ i.i.d. filters $\myw_{\!j}\sim \mathcal{N}(0,\sigma^2 I)$ with $\sigma^{2} = \left(JT\right)^{-1}$.
The expectations of the optimal frame bounds $A,B$ of $\myPhi$ are bounded by the order statistics of $Y_0, \ldots, Y_{T-1} \sim \chi^2(J)$ i.i.d., as follows
\begin{equation}\label{eq:expAB}
J^{-1} \mathbb{E}[Y^{\min}_{T}]
\leq
\mathbb{E}\left[A\right]
\leq
1
\leq
\mathbb{E}\left[B\right]
\leq
J^{-1} \mathbb{E}\left[Y^{\max}_{T}\right], 
\end{equation}
where $Y^{\min}_{T} = \min_{0\leq k < T} Y_k$ and $Y^{\max}_{T} = \max_{0\leq k < T} Y_k.$
\end{thm}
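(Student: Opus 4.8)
The plan is to reduce, for each fixed signal, the response energy to a weighted sum of i.i.d.\ chi-squared variables, and then to control the two extreme eigenvalues of the frame operator $S=\myPhi^\top\myPhi$, whose minimum and maximum are exactly $A$ and $B$. First I would fix a unit vector $\myx$ and, by Lemma~\ref{lem:quad}, write $\Vert\myPhi\myx\Vert^2=\sum_j \myw_j^\top\myQTx\myw_j$. Diagonalizing the real symmetric matrix $\myQTx=\myU\Lambda\myU^\top$ with eigenvalues $\mylambda_0,\dots,\mylambda_{T-1}\ge0$ and exploiting the rotation invariance of $\mathcal N(0,\sigma^2 I)$, the rotated filters $\myU^\top\myw_j$ are again $\mathcal N(0,\sigma^2 I)$; hence $Y_k:=\sigma^{-2}\sum_j(\myU^\top\myw_j)_k^2$ are i.i.d.\ $\chi^2(J)$ and $\Vert\myPhi\myx\Vert^2=\sigma^2\sum_k\mylambda_k Y_k$. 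Since $\sum_k\mylambda_k=\Trace\myQTx=T$ and $\sigma^2=(JT)^{-1}$, this is a convex combination $\Vert\myPhi\myx\Vert^2=\tfrac1J\sum_k p_k Y_k$ with $p_k=\mylambda_k/T\ge0$ summing to one.

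This yields the pointwise sandwich $\tfrac1J\min_k Y_k\le\Vert\myPhi\myx\Vert^2\le\tfrac1J\max_k Y_k$, where for this fixed $\myx$ the extremes are distributed as $Y^{\min}_T$ and $Y^{\max}_T$. The two central inequalities are then immediate: for any fixed unit $\myx_0$ one has $A\le\Vert\myPhi\myx_0\Vert^2\le B$ for every realization, and Proposition~\ref{prop:exp} gives $\mathbb E\Vert\myPhi\myx_0\Vert^2=1$, so $\mathbb E[A]\le1\le\mathbb E[B]$.

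The outer bounds are the real work, and here I would move to the frame operator. Writing $\myx\ast\myw_j=C_N(\myw_j)\myx$ with $C_N(\myw_j)$ the circulant matrix of $\myw_j$ and diagonalizing all $C_N(\myw_j)$ simultaneously by the DFT shows $S=F^\ast\diag(\mu_0,\dots,\mu_{N-1})F$ with $\mu_n=\sum_j|\hat{\myw}_j[n]|^2$, so that $B=\max_n\mu_n$ and $A=\min_n\mu_n$ (the extremal signals being pure tones). The obstacle is that the sandwich above is anchored at a fixed $\myx$, whereas $A,B$ optimize over $\myx$: extremizing $\tfrac1J\max_k Y_k$ over $\myx$ inflates the bound, because the diagonalizing basis $\myU=\myU(\myx)$ then sweeps a rich subset of the sphere and one recovers only the far weaker Wishart estimate $\tfrac1J\lambda_{\max}(\sigma^{-2}\myW\myW^\top)$; the analogous relaxation for $A$ deflates it. Both naive bounds therefore point the wrong way, and decoupling the optimal $\myx$ from the weights is the crux.

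To resolve this I would compare the extreme order statistics of the $N$ frequency energies $\mu_n$ directly with those of the $T$ variables $Y_k$. Each $\mu_n=f_n^\ast(\myW\myW^\top)f_n$ with $\Vert f_n\Vert^2=T$ splits, via the real and imaginary parts of $f_n$, into a convex combination $\tfrac1J\big(s_n\,\xi^{(1)}_n+(1-s_n)\,\xi^{(2)}_n\big)$ of two $\chi^2(J)$ variables with $s_n\in[0,1]$; such a mixture is more concentrated about its mean $1$ than a single $\tfrac1J\chi^2(J)$, and I expect this excess concentration to be exactly what pulls $\mathbb E[B]$ below $\tfrac1J\mathbb E[Y^{\max}_T]$ and $\mathbb E[A]$ above $\tfrac1J\mathbb E[Y^{\min}_T]$ even though there are $N\ge T$ frequencies competing in the extremes. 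Quantifying this trade-off between concentration and the number of frequencies is the hard part; the exactly solvable case $T=2$, where the circular symmetry of $\sum_j(w_{j,0}+\mathrm i\,w_{j,1})^2$ forces equality in both outer bounds, is a reliable guide and a consistency check for the general estimate.
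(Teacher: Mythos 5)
Your reduction is exactly the paper's: diagonalize $\myQTx=\myU\Lambda\myU^\top$, use rotation invariance of the Gaussian filters to write $\Vert\myPhi\myx\Vert^2=\sigma^2\sum_k\lambda_kY_k$ with $Y_k\sim\chi^2(J)$ i.i.d.\ and $\sum_k\lambda_k=\Trace(\myQTx)=T$, and obtain the inner inequalities $\mathbb{E}[A]\le1\le\mathbb{E}[B]$ from Proposition~\ref{prop:exp}. Up to that point you match the published proof step for step. The genuine gap is that you never establish the outer inequalities of \eqref{eq:expAB}: you derive the per-signal sandwich $J^{-1}Y^{\min}_T\le\Vert\myPhi\myx\Vert^2\le J^{-1}Y^{\max}_T$, observe (correctly) that it does not survive the extremization over $\myx$ because the $Y_k$'s live in the $\myx$-dependent eigenbasis $\myU(\myx)$ --- so that $\sup_{\myx}\max_kY_k(\myx)$ is only controlled by the much weaker Wishart extreme eigenvalue --- and then replace a proof by a program: compare the order statistics of the frequency energies $\mu_n=\sum_j\vert\hat{\myw}_j[n]\vert^2$ of the frame operator with those of $T$ i.i.d.\ $\chi^2(J)$ variables through a concentration-versus-multiplicity trade-off. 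That comparison is precisely the hard step and it is not carried out; moreover, your sketch treats the $\mu_n$ as if their extremes could be compared frequency by frequency, whereas for $T<N$ the $2N$ underlying chi-squared variables are built from only $JT$ Gaussian degrees of freedom and are strongly dependent across $n$. Your $T=2$ consistency check works only because there the eigenvectors of $\myQTx$ are $(1,\pm1)/\sqrt{2}$ independently of $\myx$, which is exactly the feature that fails for general $T$.

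That said, your diagnosis puts a finger on a soft spot in the paper's own argument. The published proof passes from the per-$\myx$ identity \eqref{eq:quadr-diag2} to the almost-sure inequalities \eqref{eq:Y_T}, in which $Y^{\min}_T$ and $Y^{\max}_T$ appear as single random variables, even though they were constructed from $\myU(\myx)$ and therefore vary with $\myx$; the decoupling of the extremal signal from the order statistics is assumed rather than argued. So you have correctly located the crux of the outer bounds, but neither your write-up nor the route you sketch closes it: as submitted, the two outer inequalities in \eqref{eq:expAB} remain unproven in your proposal.
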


\begin{proof}
The inner inequalities $\left(\mathbb{E}\left[A\right]\leq 1 \leq \mathbb{E}\left[B\right]\right)$ are a direct consequence of Proposition \ref{prop:exp}.
Regarding the outer inequalities, we perform an eigenvalue decomposition of $\myQTx = \myU\Lambda\myU^{\top},$ where the columns of $\myU$ contain the eigenvectors of $\myQTx$ as columns and the diagonal matrix $\Lambda$ contains the spectrum of eigenvalues, $\lambda$.
For each filter $\myw_{\!j}$ with $1\leq j \leq J$, let us use the shorthand $\myy_j = U^\top \myw_{\!j}$.
By Lemma \ref{lem:quad} we obtain
\begin{align}
\Vert \myx \ast \myw_{\!j} \Vert^2 &=
\myw_{\!j}^\top U^{\top} \Lambda U \myw_{\!j}
= \sum_{k=0}^{T-1} \lambda_k \myy_{j}[k]^2.
\label{eq:quadr-diag1}
\end{align}
We define $Y_k = \sum_{j=1}^{J} (\myy_{j}^2[k]/\sigma^2)$.
Equation \eqref{eq:quadr-diag1} yields
\begin{align}
    \Vert\myPhi \myx\Vert^2
    = \sigma^2 \sum_{k=0}^{N-1} \lambda_k \sum_{j=1}^{J} \frac{\myy_{j,k}^2}{\sigma^2}
    = \sigma^2 \sum_{k=0}^{N-1} \lambda_k Y_k.
    \label{eq:quadr-diag2}
\end{align}
Since $\myQTx$ is a real symmetric matrix, $\myU$ is an orthogonal matrix.
Thus, $\myy_j$ follows the same distribution as $\myw_{\!j}$
\begin{equation}
\myU^\top \myw_{\!j}\sim \mathcal{N}(0,\sigma^2 \myU I \myU^\top ) = \mathcal{N}(0,\sigma^2 I).
\end{equation}
For all $k$ with $0 \leq k < T$, $\myy_{j}[k]/\sigma^2$ are i.i.d. standard Gaussian random variables.
Thus, the $Y_k$'s are also i.i.d. and follow a $\chi^2(J)$ distribution.
Let us define the associated order statistics
\begin{equation}
    Y^{\min}_{T} = \min_{0\leq k < T} Y_k\quad\textrm{ and }\quad
    Y^{\max}_{T} = \max_{0\leq k < T} Y_k.
    \label{eq:Ymin-Ymax}
\end{equation}
Lemma \ref{lem:quad} implies $\sum_{k=0}^{T-1} \lambda_k = \Trace \left(\myQTx\right) = T \|\myx\|^2$.
Hence
\begin{align}\label{eq:Y_T}
\begin{split}
        \min_{\Vert \myx \Vert_2 = 1} \Vert \myPhi\myx \Vert^2 - \sigma^2 T Y_{T}^{\min} &\geq 0,\\
    \max_{\Vert \myx \Vert_2 = 1} \Vert \myPhi\myx \Vert^2 - \sigma^2 T Y_{T}^{\max} &\leq 0,
\end{split}
\end{align}
where the inequalities are understood as almost sure. Taking the expectation and setting $\sigma^2 = (JT)^{-1}$ yields the claim.
\end{proof}

The numerical simulations in Figure \ref{fig:expected_fb} align well with the statement of Theorem \ref{thm:expected-frame-bounds}.
We observe that optimal frame bounds $A$ and $B$ typically diverge away from one as $T$ grows up to $2^{10}$, a common value in audio applications.
This phenomenon is evidence of instabilities of a convnet at initialization. Our preliminary experiments showed that these instabilities are not compensated during training. Yet, further examination is needed to formulate a rigorous statement here. 

To bound the variances of $A$ and $B$, we use that the extreme values in \eqref{eq:var} are attained for an impulse and a constant signal,  respectively. A proof can be found in the appendix.
\begin{prop}\label{prop:varAB}
Let $\myPhi$ be a random filterbank with $J$ i.i.d. filters $\myw_{\!j}\sim \mathcal{N}(0,\sigma^2\ I)$ with $\sigma^{2} = \left(JT\right)^{-1}$. The variances of the optimal frame bounds $A$ and $B$ can be bounded as
\begin{equation}\label{eq:varAB}
2(TJ)^{-1} \leq \mathbb{V}\left[A\right],\mathbb{V}\left[B\right] \leq 2J^{-1}.
\end{equation}
\end{prop}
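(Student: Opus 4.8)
The plan is to reduce the fixed-input variance to an eigenvalue problem, extremize it, and only then push the extremes onto the \emph{random} frame bounds. First I would rewrite \eqref{eq:var}: for $\Vert\myx\Vert=1$ and $\sigma^2=(JT)^{-1}$ the sum over $\tau$ is exactly $\Trace(\myQTx^2)$, so that $\mathbb{V}[\Vert\myPhi\myx\Vert^2]=\tfrac{2}{JT^2}\Trace(\myQTx^2)$. Since $\myQTx=C_T(\myx)^\top C_T(\myx)$ is positive semidefinite of size $T$ with $\Trace(\myQTx)=T$ by Lemma~\ref{lem:quad}, its eigenvalues $\lambda_0,\dots,\lambda_{T-1}\ge 0$ sum to $T$ and $\Trace(\myQTx^2)=\sum_k\lambda_k^2$. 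Minimizing and maximizing $\sum_k\lambda_k^2$ over the simplex $\{\lambda\ge 0:\ \sum_k\lambda_k=T\}$ returns the range $[T,T^2]$: the floor $T$ is the flat spectrum $\lambda_k\equiv 1$, realized by the impulse ($\myQTx=I$), and the ceiling $T^2$ is the rank-one spectrum, realized by the constant signal ($\myQTx=\mathbf{1}\mathbf{1}^\top$). This is precisely the hint, and it gives for every unit $\myx$
\[
\tfrac{2}{TJ}\ \le\ \mathbb{V}\big[\Vert\myPhi\myx\Vert^2\big]\ \le\ \tfrac{2}{J}.
\]

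Next I would connect these to $A$ and $B$. Because each filter acts on $\myx$ by circular convolution, the frame operator $\myPhi^\top\myPhi$ on $\RR^N$ is circulant; its eigenvectors are the Fourier modes $\myx_m$ and its eigenvalues are $S[m]=\sum_{j}\vert\widehat{\myw}_{\!j}[m]\vert^2$. Hence the optimal frame bounds are extreme modal responses, $A=\min_m S[m]$ and $B=\max_m S[m]$, with each $S[m]=\Vert\myPhi\myx_m\Vert^2$ attached to a \emph{deterministic} unit mode $\myx_m$. The first paragraph therefore already bounds every individual variance $\mathbb{V}[S[m]]$ inside $[2/(TJ),2/J]$.

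The crux is the last passage, from the per-mode variances to $\mathbb{V}[A]$ and $\mathbb{V}[B]$. The minimizing and maximizing frequency is itself random, so $\mathbb{V}[A]=\mathbb{V}[S[m^{\ast}]]$ is \emph{not} read off the fixed-$\myx$ formula, and the naive inequality ``variance of a maximum $\le$ maximum of the variances'' is false in general (it can fail, by a factor approaching two, for negatively correlated families). For the lower bounds this is harmless: $2/(TJ)$ is the global minimum of the fixed-$\myx$ variance and hence a floor for the fluctuation of any modal response, so $\mathbb{V}[A],\mathbb{V}[B]\ge 2/(TJ)$. For the upper bounds I would lean on structure: a Wick/Isserlis computation shows the $S[m]$ are \emph{nonnegatively} correlated, and I would combine this positive association (or, alternatively, a Gaussian concentration argument in the underlying weight space) to prevent the extreme order statistic from exceeding the worst single-mode variance $2/J$. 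I expect pinning down this extreme-value variance with the sharp constant to be the main difficulty; the eigenvalue extremization of the first paragraph is routine by comparison.
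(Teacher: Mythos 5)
Your first paragraph is correct and is, in substance, identical to the paper's own argument: the paper extremizes the autocorrelation profile $\myRxx(t)^2$ directly (the impulse gives $\myRxx(t)^2=\delta_{t,0}$, the constant signal gives $\myRxx(t)^2\equiv 1$), which is exactly your extremization of $\Trace(\myQTx^2)=\sum_k\lambda_k^2$ over the simplex $\{\lambda\geq 0:\ \sum_k\lambda_k=T\}$; both routes give $2(TJ)^{-1}\leq\mathbb{V}\left[\Vert\myPhi\myx\Vert^2\right]\leq 2J^{-1}$ for every unit-norm $\myx$. Your diagonalization of the frame operator as a circulant matrix with modal responses $S[m]$ is also correct and is a clean way to see that $A$ and $B$ are extreme order statistics of a finite family of correlated quadratic forms.

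The difficulty is the last passage, and you have diagnosed it more honestly than you have resolved it. The paper's proof stops exactly where your first paragraph stops: it substitutes the impulse and the constant signal into Equation \eqref{eq:var} and asserts the bounds on $\mathbb{V}\left[A\right]$ and $\mathbb{V}\left[B\right]$, with no argument for why the extremal fixed-input variances transfer to the variances of the random extremes. Your attempted repairs do not close this gap. For the lower bound, the deduction ``$2(TJ)^{-1}$ is a floor for the fluctuation of any modal response, hence for $\mathbb{V}\left[A\right]$'' is invalid: the variance of a minimum (or maximum) over many random variables can be far smaller than the smallest individual variance --- for i.i.d.\ families the extreme order statistic typically concentrates much faster than any single coordinate --- so $\mathbb{V}\left[\min_m S[m]\right]\geq\min_m\mathbb{V}\left[S[m]\right]$ has no general justification. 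For the upper bound, positive association of the $S[m]$ (which does hold here by a Wick computation, the covariances being sums of squared cross-moments) controls covariances but does not by itself yield $\mathbb{V}\left[\max_m S[m]\right]\leq\max_m\mathbb{V}\left[S[m]\right]$; you would still need a genuine extreme-value or concentration argument, and you acknowledge as much. So your proposal correctly identifies the missing step but does not supply it; since the paper's own proof silently elides the same step, neither argument is complete as written, and a rigorous proof of \eqref{eq:varAB} would require controlling the order statistics of the correlated family $\{S[m]\}_m$, not merely the marginal variances.
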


\subsection{Asymptotics of the condition number}
The ratio $\kappa = B/A$, known as condition number, characterizes the numerical stability of $\myPhi$. In particular, $\kappa$ equals one if and only if there exists $C>0$ such that $\Vert \myPhi \myx \Vert^2 = C \Vert \myx \Vert^2$. However, its expected value, $\mathbb{E}\left[\kappa\right]$, may be strictly greater than one even so $\mathbb{E}\left[\Vert \myPhi \myx \Vert^2\right] = C \Vert \myx \Vert^2$ holds for every $\myx$. Since $A$ and $B$ are dependent random variables, $\mathbb{E}\left[\kappa\right]$ is difficult to study analytically \cite{vargas22circcond}.
We conjecture that $\mathbb{E}[\kappa] \leq (\mathbb{E}[B]/\mathbb{E}[A])$, which is equivalent to $\mathrm{cov}(\kappa, A) \geq 0$.

Unfortunately, the expected values of $Y_{T}^{\min}$ and $Y_{T}^{\max}$ that are used for the bounds of $\mathbb{E}\left[A\right]$ and $\mathbb{E}\left[B\right]$ in Theorem \ref{thm:expected-frame-bounds} are not available in closed form for finite values of $T$ \cite{casella2021chisquared}.
Nevertheless, for a large number of degrees of freedom $J$, $\chi^2(J)$ resembles a normal distribution with mean $J$ and variance $2J$, such that we propose to replace $Y_{T}^{\min}$ and $Y_{T}^{\max}$ by
\begin{equation}\label{eq:tildeY}
\tilde{Y}_{T}^{\min} =
\min_{0\leq k<T}
\tilde{Y}_k
\quad\textrm{and}\quad
\tilde{Y}_{T}^{\max} =
\max_{0\leq k<T}
\tilde{Y}_k,
\end{equation}
where the $\tilde{Y}_k$'s are i.i.d. drawn from $\mathcal{N}(J, 2J)$ \cite{lehmann2006testing}.
From the extreme value theorem for the standard normal distribution (see e.g. Theorem 1.5.3. in \cite{leadbetter1983extremes}) we know that for large $T$, we can asymptotically approximate the expectations of \eqref{eq:tildeY} by
\begin{equation}
    \mathbb{E}\left[\tilde{Y}^{\min}_T\right]\propto J-2\sqrt{J\log T}\quad \text{and}\quad \mathbb{E}\left[\tilde{Y}^{\max}_T\right]\propto J+2\sqrt{J\log T}.
\end{equation}
The equations above suggest approximate bounds for $\mathbb{E}[A]$ and $\mathbb{E}[B]$.
We draw inspiration from them to propose the value
\begin{equation}\label{eq:kappa_tilde}
\tilde{\kappa}(J,T) =\left(1 + 2\sqrt{\dfrac{ \log T}{J}}\right)\Bigg{/}\left(1 -2\sqrt{\dfrac{ \log T}{J}}\right),
\end{equation}
as asymptotic error bound for $\mathbb{E}[\kappa]$, subject to $T\rightarrow\infty$ and $J>4\log T$.
Interestingly, the level sets of $\tilde{\kappa}$ satisfy $J \propto \log T$, a scaling law which is reminiscent of the theory underlying the construction of discrete  wavelet bases \cite{mallat08wavelettour}.

\subsection{Numerical simulation}
Figure \ref{fig:cond} (top) shows empirical means of $\kappa$ for $1000$ independent realizations of $\myPhi$ and various settings of $J$ and $T$.
Qualitatively speaking we observe that convnets with few long filters (small $J$, large $T$) suffer from ill-conditioning, as measured by a large $\kappa$.
Figure \ref{fig:cond} (bottom) shows the result of the same simulation with $J$ on the horizontal axis, together with our proposed scaling law $J\propto\log T$. We observe that filterbanks that follow this scaling law have approximately the same condition number $\kappa$ on average.

\section{Conclusion}
This article presents large deviation formulas of energy dissipation in random filterbanks.
We have found that the variance of output energy $\Vert \myPhi\myx \Vert^2$ grows with the autocorrelation of the input $\myx$.
Thus, natural audio signals, which typically have high short-term autocorrelation, are \emph{adversarial examples} to 1-D convnets and trigger numerical instabilities with high probability.
Furthermore, we have shown that numerical stability depends strongly on the number of filters $J$ and their lengths $T$,
and that convnets are most stable with many short filters.
For large convnets, we have identified a scaling law ($J\propto\log T$) which roughly preserves the condition number of $\myPhi$.
Characterizing the probability distribution of the condition number for non-asymptotic values of $J$ and $T$ remains an open problem.
In practice, our findings motivate the use of regularization mechanisms that compensate for large autocorrelation of audio data, e.g., by adding adaptive noise.
To be able to draw conclusions from the instabilities at initialization to instabilities during training, further investigations of the effects of gradient descent in this setting are necessary.\footnote{
The source code for reproducing all numerical simulations can be found under \href{https://github.com/danedane-haider/Random-Filterbanks}{https://github.com/danedane-haider/Random-Filterbanks}.}

\begin{figure}
    \centering
    \includegraphics[width =\linewidth]{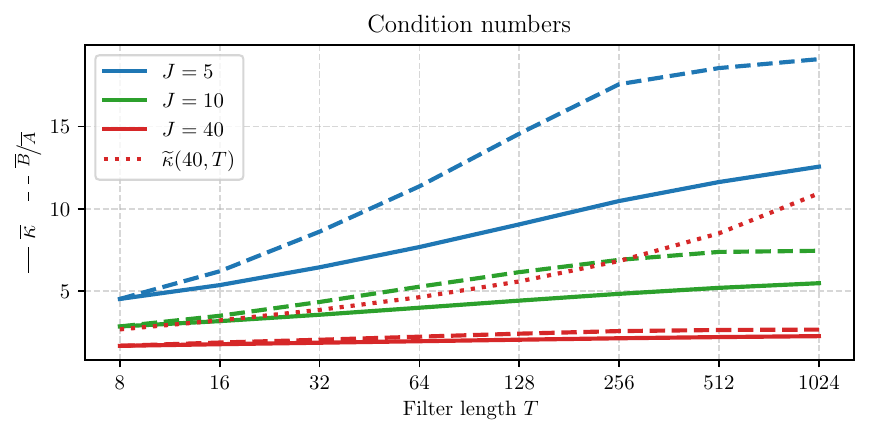}
    \label{fig:cond_naked}
    \includegraphics[width =\linewidth]{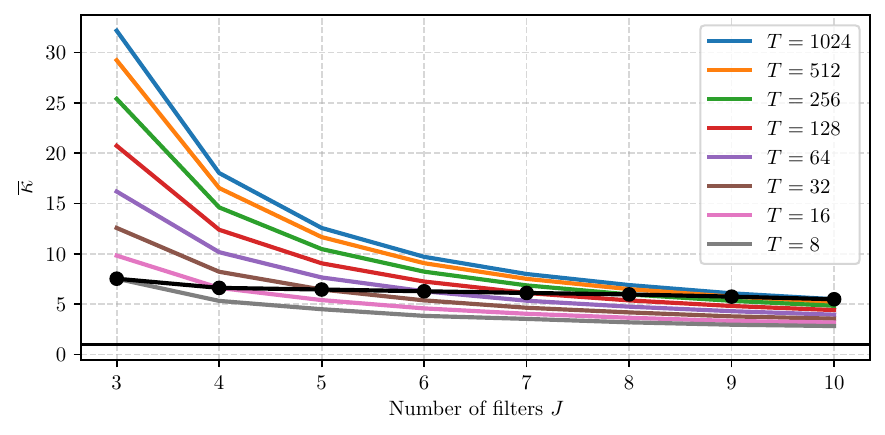}
    \caption{We denote by $\overline{A},\overline{B}$, and $\overline{\kappa}$ the empirical means of the respective quantities over $1000$ instances of $\myPhi$ with $\sigma^2=(TJ)^{-1}$.
    Top: Comparison of $\overline{\kappa}$ (solid) and $\overline{B}/\overline{A}$ (dashed) for increasing filter length $T$ and different values of $J$.
    Bottom: Empirical mean $\overline{\kappa}$ for increasing numbers of filters $J$ and different values $T$. For $J=\log_2 T$ (solid black), $\overline{\kappa}$ remains approximately constant.}
    \label{fig:cond}
\end{figure}

\section*{Acknowledgment}
D. Haider is recipient of a DOC Fellowship of the Austrian Academy of Sciences at the Acoustics Research Institute (A 26355). V. Lostanlen is supported by ANR MuReNN. The work of M. Ehler was supported by the WWTF project CHARMED (VRG12-009) and P. Balazs was supported by the FWF projects LoFT (P 34624) and NoMASP (P 34922).

\clearpage
\bibliographystyle{IEEEtran}
\bibliography{IEEEabrv,lostanlen2023spl_frames}

\clearpage
\thispagestyle{empty} 
\section{Appendix}
As a complement to what we derived for convnets in Proposition \ref{prop:exp}, we show that the variance of the energy of a fully--connected layer with Gaussian initialization does not depend on the characteristics of the input signal. To see this, we use that any Gaussian matrix $\myW\in \RR^{M\times N}$ with $ M\geq N$ is associated to a random tight frame of any order $p$, i.e., there is $C_p>0$ such that $\mathbb{E}\left[\Vert \myW\myx\Vert^{2p}\right] = C_p \Vert\myx\Vert^{2p}$ for any $p>1$ \cite{ehler2015pre}. For mean zero and variance $\sigma^2$ we have that $C_p=M(M+2)\cdots(M+2p-2)\sigma^{2p}$, see Example 4.4 in \cite{ehler2015pre}.
\begin{prop}\label{prop:gauss}
    Let $\myx\in \RR^N$ and $\myW\in \RR^{M\times N},\; M\geq N$ be a random matrix with entries sampled i.i.d. from $\mathcal{N}(0,\sigma^2)$.
    Then
    \begin{align}\label{eq:gaussexp}
        \mathbb{E}\left[\Vert \myW\myx\Vert^2\right] &= M \sigma^2 \Vert\myx\Vert^2,\\ 
        \mathbb{V}\left[\Vert \myW\myx\Vert^2\right] &= 2M\sigma^4 \Vert\myx\Vert^4.\label{eq:gaussvar}
    \end{align}
\end{prop}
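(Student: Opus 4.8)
The plan is to exploit the fact that, unlike the convolutional case of Proposition~\ref{prop:exp}, each output coordinate of a dense layer is driven by its \emph{own} independent row of $\myW$, so the coordinates decouple completely and the only input-dependent quantity that survives is $\Vert\myx\Vert$. Concretely, I would write $\myW\myx\in\RR^M$ coordinatewise: its $i$-th entry is $\langle \myw^{(i)},\myx\rangle$, where $\myw^{(i)}$ is the $i$-th row of $\myW$. Since $\myw^{(i)}\sim\mathcal{N}(0,\sigma^2 I_N)$, each entry is a centered Gaussian with variance $\sigma^2\Vert\myx\Vert^2$, and because the rows are independent, the $M$ entries of $\myW\myx$ are mutually independent.

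First I would normalize. Dividing by $\sigma\Vert\myx\Vert$ turns each coordinate into a standard Gaussian, so that $\Vert\myW\myx\Vert^2/(\sigma^2\Vert\myx\Vert^2)=\sum_{i=1}^M Z_i^2$ with $Z_i\sim\mathcal{N}(0,1)$ i.i.d. This is exactly a $\chi^2(M)$ random variable, hence $\Vert\myW\myx\Vert^2 = \sigma^2\Vert\myx\Vert^2\,X$ with $X\sim\chi^2(M)$. Invoking the textbook moments $\mathbb{E}[X]=M$ and $\mathbb{V}[X]=2M$ then gives \eqref{eq:gaussexp} and \eqref{eq:gaussvar} immediately, using $\mathbb{V}[\Vert\myW\myx\Vert^2]=(\sigma^2\Vert\myx\Vert^2)^2\,\mathbb{V}[X]$.

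Alternatively, and closer to the framing of the paragraph preceding the statement, one can read off both moments from the tight-frame identity $\mathbb{E}[\Vert\myW\myx\Vert^{2p}]=C_p\Vert\myx\Vert^{2p}$ with $C_p=M(M+2)\cdots(M+2p-2)\sigma^{2p}$. Taking $p=1$ yields \eqref{eq:gaussexp}, while $p=2$ gives $\mathbb{E}[\Vert\myW\myx\Vert^4]=M(M+2)\sigma^4\Vert\myx\Vert^4$; subtracting the square of the first moment collapses $M(M+2)-M^2$ to $2M$ and yields \eqref{eq:gaussvar}.

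I expect no genuine obstacle: the computation is routine once the coordinatewise decoupling is noticed. The only point deserving emphasis is precisely \emph{why} no autocorrelation term appears, in contrast to \eqref{eq:var}. In the vectorized picture $\Vert\myW\myx\Vert^2=\mathrm{vec}(\myW)^\top(\myx\myx^\top\otimes I_M)\mathrm{vec}(\myW)$, applying the quadratic-form variance formula \eqref{eq:varquadr} gives $2\sigma^4\Trace((\myx\myx^\top\otimes I_M)^2)=2\sigma^4 M\Vert\myx\Vert^4$, since $(\myx\myx^\top)^2=\Vert\myx\Vert^2\myx\myx^\top$ erases all off-diagonal structure. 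This is the structural contrast with $\myQTx$, whose off-diagonal entries encode $\myRxx$ and therefore leak the autocorrelation into the variance.
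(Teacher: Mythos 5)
Your proposal is correct, and your second paragraph is in fact verbatim the paper's own proof: the authors quote the order-$p$ tight-frame constants $C_p=M(M+2)\cdots(M+2p-2)\sigma^{2p}$ from Example 4.4 of the cited reference, read off \eqref{eq:gaussexp} from $C_1=M\sigma^2$, and obtain \eqref{eq:gaussvar} from $\mathbb{E}[\Vert\myW\myx\Vert^4]=M(M+2)\sigma^4\Vert\myx\Vert^4$ minus the squared mean. Your lead argument, however, is genuinely different and arguably preferable as a self-contained derivation: observing that the rows of $\myW$ are independent makes the coordinates of $\myW\myx$ i.i.d.\ $\mathcal{N}(0,\sigma^2\Vert\myx\Vert^2)$, so $\Vert\myW\myx\Vert^2=\sigma^2\Vert\myx\Vert^2 X$ with $X\sim\chi^2(M)$ and both moments follow from $\mathbb{E}[X]=M$, $\mathbb{V}[X]=2M$ without importing any external result (note also that this route, like the statement itself, does not actually use $M\geq N$, which matters only for the frame/injectivity interpretation). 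What the paper's route buys is consistency with its surrounding discussion of random tight frames of order $p$ and the remark that $\myPhi$ fails to be tight of order $p>1$; what yours buys is elementary transparency, and your closing Kronecker computation $2\sigma^4\Trace\big((\myx\myx^\top\otimes I_M)^2\big)=2\sigma^4 M\Vert\myx\Vert^4$ is a nice structural complement, since it exhibits exactly why the rank-one matrix $\myx\myx^\top$ carries no analogue of the off-diagonal autocorrelation entries of $\myQTx$ that produce the $\myRxx$-dependence in \eqref{eq:var}.
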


\begin{proof}
For $p=1$, we have that $C_1=M\sigma^{2}$, showing \eqref{eq:gaussexp}.
For the variance, we use that $C_2=M(M+2)\sigma^4$ and deduce
    \begin{align*}
    \mathbb{V}\left[\Vert \myW\myx\Vert^2\right] 
    &= \mathbb{E}\left[ \left( \Vert \myW\myx\Vert^2 - M\sigma^2 \Vert\myx\Vert^2 \right)^2 \right]\\
    &= \mathbb{E}\left[ \Vert \myW\myx\Vert^4 \right] - M^2\sigma^4\Vert\myx\Vert^4
    = 2M\sigma^4 \Vert\myx\Vert^4.
    \end{align*}
\end{proof}
By Proposition \ref{prop:exp}, a random filterbank $\myPhi$ is a random tight frame of order one. For $p>1$, this is in general not the case.

\begin{proof}[Proof of Lemma \ref{lem:quad}]
Given $\myx\in \RR^N$ and $\myw\in \RR^T$, we write the circulant convolution $\myx \ast \myw$ in Equation \eqref{eq:conv} as the matrix-vector multiplication $C_T(\myx) \myw$ where
\begin{equation*}
    C_T(\myx) =
    \begin{pmatrix}
        \ \myx[0] & \myx[N-1] & \cdots & \myx[N-T+1]\ \ \\
        \ \myx[1] & \myx[0] & \cdots & \myx[N-T+2]\ \ \\
        \ \vdots & \vdots &  & \vdots\ \  \\
        \ \myx[N-2] & \myx[N-3] & \cdots & \myx[N-T-1]\ \  \\
        \ \myx[N-1] & \myx[N-2] & \cdots & \myx[N-T]\ \ 
    \end{pmatrix}
\end{equation*}
contains the first $T$ columns of the circulant matrix generated by a reversed version of $\myx$.
The entries are given by $$C_T(\myx)[n,t]=\myx[(n-t)\;\mathrm{mod}\;N]$$ for $0 \leq n < N$ and $0 \leq t < T$. 
We write down its squared Euclidean norm as a quadratic form
\begin{align*}
    \Vert\myx \ast \myw\Vert^2
    = \langle C_T(\myx) \myw,C_T(\myx) \myw \rangle
    = \langle\myw, Q_T(\myx) \myw\rangle
\end{align*}
where $Q_T(\myx) = C_T(\myx)^\top C_T(\myx)$.
Recalling the definition of circular autocorrelation \eqref{eq:auto}, we conclude with
\begin{align*}
    Q_T(\myx)[t, t'] &= \sum_{n = 0}^{N-1} \myx[(n-t) \;\mathrm{mod}\;N]\ \myx[(n-t') \;\mathrm{mod}\;N]\\
    &= R_{\myx \myx}((t'-t)\;\mathrm{mod}\;N).
\end{align*}
The moreover part is easily seen by $0\leq t < T$,
\begin{equation}
Q_{T}(\myx)[t, t] =
R_{\myx \myx}(0) =
\sum_{n=0}^{N-1}
\myx[n]^2
= \Vert \myx \Vert^2.
\end{equation}
\end{proof}

\begin{proof}[Proof of Proposition \ref{prop:cheb}]
We recall Cantelli's inequality \cite{feller68introtoprob}:
\begin{equation}
\mathbb{P}\Big[
 Z - \mathbb{E}\left[Z\right] \geq \beta
\Big]
\leq \dfrac{\mathbb{V}\left[Z\right]}{\mathbb{V}\left[Z\right]+\beta^2}.
\end{equation}
where $\beta>0$ and $Z$ has finite mean and variance.
Given $\alpha$ and $\myx$, we set $Z=\Vert\myPhi\myx\Vert^2$ and $\beta = \alpha \Vert \myx \Vert^2$.
With Proposition \ref{prop:exp}, we replace $\mathbb{E}[Z]$ by $JT \sigma^2\Vert \myx \Vert^2$.
With Lemma \ref{lem:quad}, we replace $\Vert \myx \Vert^4$ by $\myRxx[0]^2$.
Setting $\sigma^2=(JT)^{-1}$ concludes the proof.
\end{proof}

Our proof of Proposition \ref{prop:chernoff} hinges on the following lemma.
\begin{lem}[Lemma 8 in Birgé \emph{et al.} \cite{birge98lemma}]\label{lem:birge}
For any $v,c,\beta >0$,
$$\inf_{\mu>0} \frac{\mu^2v^2}{1-2\mu c} -\mu\beta\leq -\frac{\beta^2}{2c\beta + 2v^2}.$$
\end{lem}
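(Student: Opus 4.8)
The plan is to lean on the one-sided nature of the claim: the left-hand side is an infimum, so it is bounded above by the value of the objective at \emph{any} admissible point, and it therefore suffices to exhibit a single well-chosen $\mu$ rather than to carry out a full minimization. Writing $g(\mu)$ for the bracketed quantity, I would first restrict to the interval on which the denominator is positive (the natural domain of the underlying cumulant bound), record that $g$ is finite and smooth there with $g(0^+)=0$ and $g'(0^+)=-\beta<0$, and note that $g$ dips strictly below zero before blowing up at its pole. Hence a single interior evaluation already produces a negative upper bound of the correct form, and no compactness argument or closed form for the exact minimizer is required.

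The key step is the choice
\[
\mu_0 = \frac{\beta}{v^2 + c\beta},
\]
which is engineered so that the denominator occurring in $g(\mu_0)$ reduces to $\tfrac{v^2}{v^2+c\beta}$; the factor $v^2$ then cancels against the numerator, leaving the quadratic term equal to $\tfrac{\beta^2}{2(v^2+c\beta)}$. Since the linear term is $\mu_0\beta = \tfrac{\beta^2}{v^2+c\beta}$, subtraction yields
\[
g(\mu_0) = \frac{\beta^2}{2(v^2+c\beta)} - \frac{\beta^2}{v^2+c\beta} = -\frac{\beta^2}{2(v^2+c\beta)} = -\frac{\beta^2}{2c\beta+2v^2}.
\]
Because $\inf_{\mu} g(\mu) \le g(\mu_0)$, this is exactly the asserted inequality.

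The only genuine obstacle is discovering $\mu_0$, as everything afterwards is a one-line substitution. I would locate it either by imposing the design equation that the denominator collapse to $v^2/(v^2+c\beta)$, or equivalently by solving the stationarity condition $g'(\mu_0)=0$ for the convenient quadratic surrogate in which the denominator is held fixed; both routes deliver $\mu_0=\beta/(v^2+c\beta)$. I would then verify admissibility, i.e.\ that $\mu_0$ lies strictly inside the domain on which $g$ is finite, which is immediate from $v^2>0$. A remark worth including is that $\mu_0$ is deliberately \emph{not} the exact minimizer of $g$: the suboptimal but clean evaluation point already produces the stated closed form, which is precisely what keeps the argument free of square roots and makes the bound usable in the proof of Proposition~\ref{prop:chernoff}.
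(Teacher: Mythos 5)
Your strategy (evaluate the objective at one admissible point) is the standard one, but your arithmetic does not match the functional as printed. Write $g(\mu)=\frac{\mu^2v^2}{1-2\mu c}-\mu\beta$ and take your point $\mu_0=\beta/(v^2+c\beta)$. Then $1-2\mu_0 c=\frac{v^2-c\beta}{v^2+c\beta}$, \emph{not} $\frac{v^2}{v^2+c\beta}$: you computed $1-\mu_0 c$, dropping the factor $2$. Consequently $\mu_0$ is admissible only when $c\beta<v^2$ (for $v=c=\beta=1$ it lands exactly on the pole $\mu=1/(2c)$), and even when it is admissible one finds $g(\mu_0)=\frac{c\beta^3}{(v^2+c\beta)(v^2-c\beta)}>0$, which is useless. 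Your evaluation of the quadratic term also silently inserts a factor $\tfrac{1}{2}$ that is not present in the printed expression ($\mu_0^2v^2$ divided by your claimed denominator gives $\frac{\beta^2}{v^2+c\beta}$, not $\frac{\beta^2}{2(v^2+c\beta)}$). The gap is moreover not repairable by a cleverer choice of $\mu_0$: substituting $D=1-2\mu c$ and applying AM--GM gives the exact value
\begin{equation*}
\inf_{0<\mu<1/(2c)}\left(\frac{\mu^2v^2}{1-2\mu c}-\mu\beta\right)=-\frac{\beta^2}{\bigl(v+\sqrt{v^2+2c\beta}\bigr)^2},
\end{equation*}
which is \emph{strictly greater} than $-\frac{\beta^2}{2c\beta+2v^2}$ for all $v,c,\beta>0$, since $\bigl(v+\sqrt{v^2+2c\beta}\bigr)^2=2v^2+2c\beta+2v\sqrt{v^2+2c\beta}$. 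On the natural domain where the denominator is positive---the only domain on which the expression functions as a cumulant bound---no admissible point achieves the stated right-hand side; the printed inequality holds only vacuously through $\mu>1/(2c)$, where $g\to-\infty$, a regime with no probabilistic content.

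What your computation \emph{does} prove, verbatim, is the lemma in the Bernstein normalization $\frac{\mu^2v^2}{2(1-\mu c)}$ on $0<\mu<1/c$: there $1-\mu_0c=\frac{v^2}{v^2+c\beta}>0$ always, the quadratic term is $\frac{\beta^2}{2(v^2+c\beta)}$, and $g(\mu_0)=-\frac{\beta^2}{2c\beta+2v^2}$ exactly as you wrote---and that is the form in which the Birg\'e--Massart lemma is usually quoted. Note that the paper itself gives no proof of this statement (it is cited from the literature), so the real finding here is that its transcription appears to carry a normalization typo: applying the correctly normalized lemma to the bound in the proof of Proposition \ref{prop:chernoff}, which is of the printed form $\frac{\mu^2v^2}{1-2\mu c}$, yields the weaker constant $-\frac{\beta^2}{4(v^2+c\beta)}$ (attained, e.g., at $\mu_0=\beta/(2(v^2+c\beta))$, for which $1-2\mu_0c=\frac{v^2}{v^2+c\beta}$), so the exponent in \eqref{eq:cher} is too strong by a factor of up to $2$. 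To make your argument a correct proof you must either restate the lemma with denominator $2(1-\mu c)$, where your point works, or keep the printed functional and prove only the $4(v^2+c\beta)$ bound.
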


\begin{proof}[Proof of Proposition \ref{prop:chernoff}]
We show \eqref{eq:cher} via the generic Chernoff bounds for any random variable $Z$
\begin{equation}
\mathbb{P}\left[
Z \geq \beta
\right]
\leq \inf_{\mu>0} \mathbb{E}\left[e^{\mu Z}\right]e^{-\mu \beta}.
\end{equation}
We set $Z=\Vert \myPhi \myx \Vert^2- \Vert \myx \Vert^2$ and use \eqref{eq:quadr-diag2}, together with Lemma \ref{lem:quad} to see that $Z=\sum_{k=0}^{T-1}\sum_{j=1}^J\sigma^2\lambda_k(\myy_{j}[k]^2-1)$. A straightforward computation gives
\begin{align*}
    \log \mathbb{E}\left[ e^{\mu Z} \right] = \sum_{k=0}^{T-1}\sum_{j=1}^J \log \mathbb{E}\left[ \exp\left(\mu\sigma^2\lambda_k(\myy_{j}[k]^2-1)\right) \right].
\end{align*}
Recall that $\frac{\myy_{j}[k]}{\sigma^2}\sim \mathcal{N}(0,1)$. Analog to the proof of Lemma $1$ in \cite{laurent00massart}, we use that the mapping $\psi:u\mapsto \log \mathbb{E}\left[ \exp\left(u\sigma^2( X^2-1)\right) \right]$ satisfies
$\psi(u)\leq \frac{u^2\sigma^4}{1-2u\sigma^2}$ for any $X\sim \mathcal{N}(0,1)$ and $0<u<\frac{1}{2\sigma^2}$. Since $\myCTx$ is a principal sub-matrix of a positive definite matrix (autocorrelation matrix), $\lambda_k>0$ for all $k=0,\dots, T-1$. Therefore, for $\mu<\frac{1}{2\sigma^2\max_k\lambda_k}$,
\begin{align}\label{eq:log}
    \log \mathbb{E}\left[ e^{\mu Z} \right]
    \leq
    \sum_{k=0}^{T-1}\sum_{j=1}^J \frac{\left(\mu \lambda_k\right)^2\sigma^4}{1-2\mu\sigma^2 \lambda_k}
    \leq \frac{\mu^2\sigma^4J\Vert \mylambda \Vert_2^2}{1-2\mu\sigma^2\Vert \mylambda \Vert_\infty}.  
\end{align}
Finally, using \eqref{eq:log} and Lemma \ref{lem:birge} with $v^2 = \sigma^4J\Vert \mylambda \Vert_2^2$ and $c=\sigma^2\Vert \mylambda \Vert_\infty$, we obtain
\begin{align*}
    \inf_{\mu>0} \mathbb{E}\left[e^{\mu Z}\right]e^{-\mu \beta}
    &= \exp\left( \inf_{\mu>0} \log \mathbb{E}\left[e^{\mu Z}\right] -\mu \beta \right)\\
    &\leq \exp\left( \inf_{\mu>0} \frac{\mu^2\sigma^4J\Vert \mylambda \Vert_2^2}{1-2\mu\sigma^2\Vert \mylambda \Vert_\infty} -\mu\beta \right)\\    
    &\leq \exp\left(- \frac{\beta^2}{2\beta \sigma^2 \Vert \mylambda \Vert_\infty + 2\sigma^4 J \Vert \mylambda \Vert_2^2} \right).
\end{align*}
Setting $\beta = \alpha \Vert \myx \Vert^2$ and $\sigma^2=(JT)^{-1}$ yields the claim.
\end{proof}

\begin{proof}[Proof of Proposition \ref{prop:varAB}]
    Observe that $$\min_{\Vert \myx \Vert^2=1}R_{\myx\myx}(t)^2=\begin{cases}
        1 &\text{ if } t=0\\ 0 &\text{ otherwise.}
    \end{cases}\quad \text{ and }\quad \max_{\Vert \myx \Vert^2=1}R_{\myx\myx}(t)^2=1$$
    for $0\leq t <T$. These extreme values are attained for an impulse and a constant signal respectively.
    Using these signals in Equation \eqref{eq:var} of Proposition \ref{prop:exp} and setting $\sigma^2=(TJ)^{-1}$ yields the result.
\end{proof}

\end{document}